\newtheorem{theorem}{Theorem}
\newenvironment{proof}{{\noindent \textbf{Proof}}\quad}{\hfill $\square$\par}
\title{DANE: Domain Adaptive Network Embedding}
\author{
Yizhou Zhang$^1$\and
Guojie Song$^{1,2}$\footnote{Corresponding author}\and
Lun Du$^{1,2}$\and
Shuwen Yang$^1$\And
Yilun Jin$^1$\\
\affiliations
$^1$School of Electronic Engineering and Computer Science, Peking University%, Beijing, P.R. China 
\\
$^2$Key Laboratory of Machine Perception (Ministry of Education), Peking University%, Beijing, P.R. China\\
\emails
\{zhangyizhou2015, dulun, gjsong, swyang, yljin\}@pku.edu.cn
}
\author{
First Author$^1$
\and
Second Author$^2$\and
Third Author$^{2,3}$\And
Fourth Author$^4$
\affiliations
$^1$First Affiliation\\
$^2$Second Affiliation\\
$^3$Third Affiliation\\
$^4$Fourth Affiliation
\emails
\{first, second\}@example.com,
third@other.example.com,
fourth@example.com
}
\begin{document}

\maketitle

\begin{abstract}

Recent works reveal that network embedding techniques enable many machine learning models to handle diverse downstream tasks on graph structured data. However, as previous methods usually focus on learning embeddings for a single network, they can not learn representations transferable on multiple networks. Hence, it is important to design a network embedding algorithm that supports downstream model transferring on different networks, known as domain adaptation. In this paper, we propose a novel Domain Adaptive Network Embedding framework, which applies graph convolutional network to learn transferable embeddings. In DANE, nodes from multiple networks are encoded to vectors via a shared set of learnable parameters so that the vectors share an aligned embedding space. The distribution of embeddings on different networks are further aligned by adversarial learning regularization. In addition, DANE's advantage in learning transferable network embedding can be guaranteed theoretically. Extensive experiments reflect that the proposed framework outperforms other well-recognized network embedding baselines in cross-network domain adaptation tasks.

\end{abstract}

\section{Introduction}

Network embedding, which learns low-dimensional embedding vectors for nodes from networks, is an important technique enabling the applications of machine learning models on network-structured data \cite{Perozzi2014DeepWalk,Tang2015LINE,Grover2016node2vec,Hamilton2017Inductive}. It learns to preserve structural and property similarity in embedding space and can support different downstream machine learning tasks, such as node classification \cite{Yang2016Semi} and network visualization \cite{Tang2016Visual}. 

However, most of existing methods mainly focus on learning representations for nodes from a single network. 
As a result, when handling multiple networks, they suffer from embedding space drift \cite{Du2018DNE} and embedding distribution discrepancy \cite{Tzeng2017Adv}, resulting in decreased accuracy when transferring the downstream machine learning models across networks to handle the same task.
As the branch of transfer learning that handles the same task on different datasets is usually known as domain adaptation \cite{Mansour2009Domain}, we define the network embedding algorithms that can support such transfer learning on different networks as domain adaptive network embedding, which brings following advantages.
The first advantage is that it alleviates the cost of training downstream machine learning models by enabling models to be reused on other networks. The second advantage is that it handles the scarcity of labeled data by transferring downstream models trained well on a labeled network to unlabeled networks. Apart from the above advantages, compared with traditional discriminative domain adaptation methods requiring label information from source domain, domain adaptive network embedding learns representations in an unsupervised manner, requiring no label information from neither the source network nor the target network. Therefore, it makes no difference which network is the source network in downstream tasks, enabling bidirectional domain adaptation.

Currently, most researches of domain adaptation concentrate on CV and NLP fields \cite{Long15DAN,Fu2017Domain}. 
existing domain adaptation methods can not be directly applied on network embedding problems. First, 
these methods are usually designed for CV and NLP tasks, where
samples, e.g. images and sequences are independent and identically distributed, resulting in little requirement for model rotational invariance \cite{Khasanova17Invariant}. However, network structured data, where nodes are connected with edges representing their relations, require models with rotational invariance because of the phenomenon known as graph isomorphism \cite{DBLP:conf/nips/DefferrardBV16}. 
Therefore, existing methods can not model network structural information, which is the core of network embeddings. Second, most existing domain adaptation models learn discriminative representations in a supervised manner \cite{Tzeng2017Adv}, where the value of loss function is only associated with each single sample's absolute position in their feature space. Network embedding, alternatively, usually aims to learn multipurpose representations in an unsupervised manner by preserving the relative position of all node pairs, resulting in increased difficulty in optimization. As a result, more stable model architecture and loss functions are required to alleviate such difficulty.

Essentially, domain adaptive network embedding suffers from two major challenges, the more straightforward of which is the \emph{embedding space alignment}, which means that structurally similar nodes should have similar representations in the embedding space, even if they are from different networks. However, many typical network embedding methods only preserve the structural similarity within a single network \cite{Heimann2017MLG}
which are not applicable for cross-network node pairs. The other less explicit challenge is that \emph{distribution shift} of embedding vectors also influences the performance of model on target networks, because most machine learning models perform as guaranteed only when they work on data with similar distribution as training data.

In this paper, we propose DANE, an unsupervised network embedding framework handling embedding space drift and distribution shift in domain adaptation via graph convolutional network \cite{KipfW16} and adversarial learning \cite{Goodfellow14GAN}. 
To enable GCN to preserve the structural similarity of cross-network node pairs, we apply shared weight architecture, which means that the GCN embeds nodes from both source network and target network via shared learnable parameters.

The distribution shift in domain adaptation is handled via an adversarial learning component based on least square generative adversarial network \cite{Mao2017LSGAN} . Compared with the original GAN used in existing domain adaptation models, the loss function and architecture of LSGAN can generate gradients with larger scale for samples lying a long way to the decision boundary, which relieves the problem of vanishing gradients when models are close to convergence. 

In summary, our contributions are:
\begin{itemize}
\item We formulate the task of designing a domain adaptive network embedding framework. It is beneficial for transferring models across multiple networks, which has not been explored by previous network embedding methods.
\item We propose DANE, a framework that can achieve embedding space alignment and distribution alignment via shared weight graph convolutional network and adversarial learning regularization based on LSGAN.
\item We constructed two datasets to test network embedding methods' performance on the task of supporting domain adaptation and conducted experiments on DANE and other baselines via these datasets. The result indicate that our model has leading performance in this task.
\end{itemize}

\section{Related Work}

\textbf{Network Embedding.} Network embedding maps the vertices or edges of a network into a low-dimensional vector space. Such mapping can benefit the application of many machine learning models in many downstream tasks on network structured data\cite{Yang2016Semi,Tang2016Visual,8594902,MEgo,WangLSFDL19}. Existing methods include transductive methods and inductive methods. Transductive methods directly optimize the representation vectors. They usually apply matrix factorization \cite{Li2019Sep} 
or Skip-Gram model inspired by word2vec \cite{Perozzi2014DeepWalk,Tang2015LINE,Grover2016node2vec}. Inductive methods learn functions which take the structural information and node features as input and output their representation vectors. They usually model the mapping function via deep neural networks like graph convolutional networks\cite{Kipf2016VGAE,Hamilton2017Inductive}. All the methods above mainly focus on representing a single network better, without considering the domain adaptation on multiple unconnected networks.
	
\textbf{Domain Adaptation.} Domain adaptation aims to learn machine learning models transferable on different but relevant domains sharing same label space \cite{Mansour2009Domain}. Recent research mainly focus on learning domain invariant representation via neural networks so that deep learning models trained on labeled source datasets can be transferred to a target datasets with few or no labeled samples \cite{Ganin2015Unsupervised,Long15DAN,Peng2016Multi,Fu2017Domain,Tzeng2017Adv}. 
Above methods are mostly designed for image or text while not considering the application on graph structured data because of their lack of rotational invariance and limits in handling graph isomorphism. Only a few methods \cite{DAGraph,GraphMatchDA} make use of information of a similarity graph whose edges represent artificially designed similarity (like dot product of representation vectors) of independent samples (text or images), instead of real links between relevant nodes.

\section{Problem Definition}

\begin{table}[bp]
\centering
\begin{tabular}{c||c}
\Xhline{1.5pt}
Notations & {\bf Definition}\\
\Xhline{1.5pt}
$G$ & Network\\
\hline
$G_{src}$ &  Source network \\
\hline
$G_{tgt}$ & Target network \\
\hline
$N_{G}$ &  Node set of network $G$ \\
\hline
$E_{G}$ & Edge set of network $G$ \\
\hline
$X_{G}$ & Node feature matrix of network $G$ \\
\hline
$Z$ & Embedding space  \\
\hline
$V_{G}$ & Embedding vector set of network $G$ \\
\hline
$p_{G}(z)$ & Probability density function of $V_{G}$ in $Z$  \\
\Xhline{1.5pt}
\end{tabular}
\caption{\label{tab:unsurpervised} Notations.}
\end{table}

\textbf{Definition 1: Domain Adaptation on Networks.} 
Domain adaptation on networks aims to train a machine learning model $M$ for a downstream task by minimizing its loss function on $G_{src}$ and ensure that $M$ can also have good performance when we transfer it on $G_{tgt}$ to handle the same task. 

To this end, following two constraints need to be satisfied:

\begin{figure*}[tbp]
\centering
\includegraphics[width=6.5in]{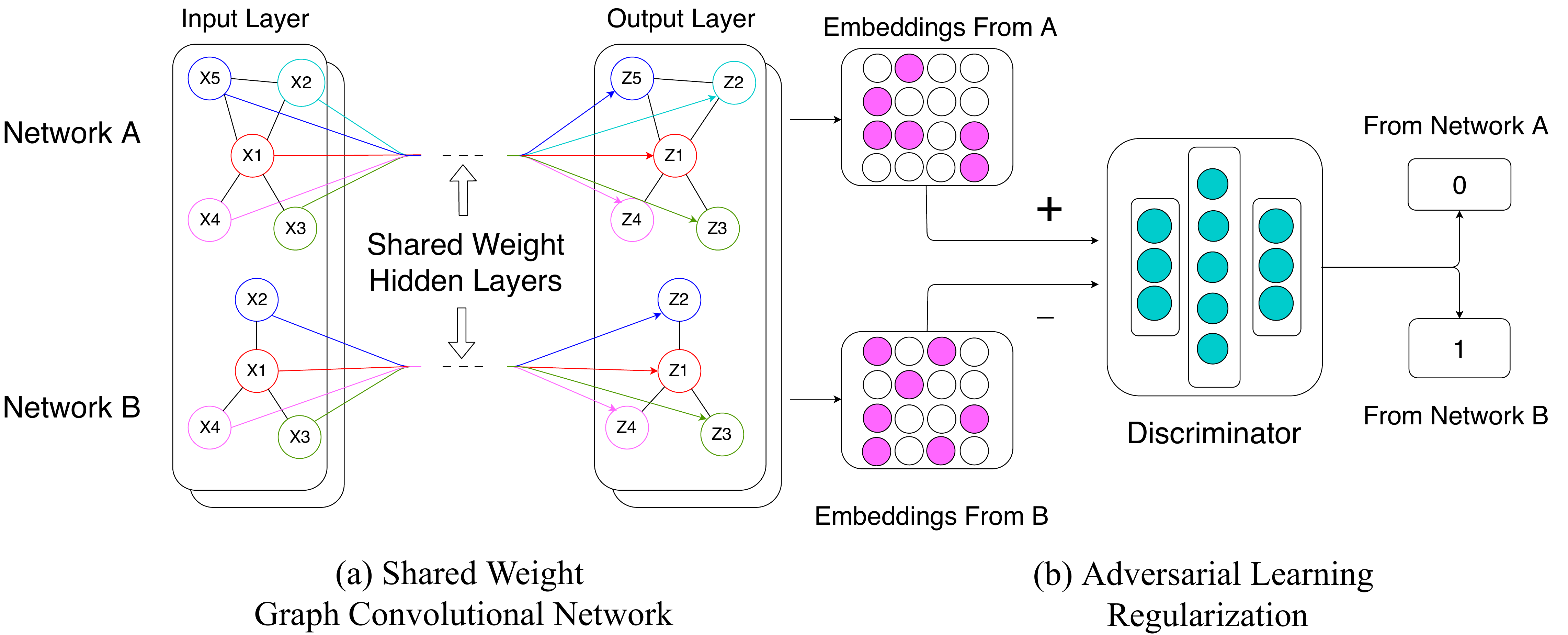}
\caption{An overview of DANE. DANE consists of two major components: (a) shared weight graph convolutional network (SWGCN) projects the nodes from two networks into a shared embedding space and preserve cross-network similarity; (b) adversarial learning regularization is a two-player game where the first player is a discriminator trained to distinguish which network a representation vector is from and the second player is the SWGCN trying to generate embeddings that can confuse the discriminator.}
\label{fig:1}
\end{figure*} 

\begin{itemize}
\item \textbf{Embedding Space Alignment.}
Embedding space alignment aims to project the nodes of $G_{src}$ and $G_{tgt}$ into a shared embedding space $Z$, where structurally similar nodes have similar representation vectors even if they are from different networks, so that M can be transferable on $G_{src}$ and $G_{tgt}$.
\item \textbf{Distribution Alignment.}
Distribution alignment aims to constrain $p_{src}(z)$ and $p_{tgt}(z)$ close almost everywhere in $Z$, so that $V_{src}$ and $V_{tgt}$ can have similar distributions.
\end{itemize}

Obviously, not arbitrary pairs of networks enables bidirectional domain adaptation. In this paper, we mainly deal with networks where edges are homogeneous and node features express the same meaning. We denote such networks as domain compatible networks.

\textbf{Problem Statement: Domain Adaptive Network Embedding.} Given two domain compatible networks $G_{A}$ and $G_{B}$, domain adaptive network embedding aims to learn network embedding $f_G: N_{A} \bigcup N_{B} \rightarrow \mathbb{R}^d $ %for both $G_{A}$ and $G_{B}$ 
in an unsupervised manner, which can support bidirectional domain adaptations from $G_A$ to $G_B$ and reciprocally as well, by achieving embedding space alignment and distribution alignment.

\section{Proposed Method}

\subsection{Overall Framework}

To enable domain adaptation on networks, we propose domain adaptive network embedding (DANE), a model that leverages a Shared Weight Graph Convolutional Network to achieve embedding space alignment and apply Adversarial Learning Regularization to achieve distribution alignment. 
The overview of our model is shown in Fig 1.
\begin{itemize}
\item \textbf{Shared Weight Graph Convolutional Network.} In order to learn embeddings that preserve the cross-network structural similarity, we use shared learnable parameters when encoding the nodes from two networks to vectors via GCN.
\item \textbf{Adversarial Learning Regularization.} To align the the distribution of embedding vectors from different networks, we apply adversarial learning to force our model to learn embeddings that can confuse the discriminator trained to distinguish which network a representation vector is from.
\end{itemize}

Without loss of generality, we assume $G_A$ is the source network and $G_B$ is the target network. As DANE's architecture and loss function are symmetric, it is able to handle bidirectional domain adaptations.

\subsection{Shared Weight Graph Convolutional Network}
Graph Convolutional Network (GCN) \cite{KipfW16} represents each vertex in a graph as an embedding vector based on node feature matrix $X$ and adjacency matrix $A$. In GCN, each layer can be expressed as follows:
\begin{equation}
H^{(l+1)}=\sigma(\hat{D}^{-\frac{1}{2}}\hat{A}\hat{D}^{-\frac{1}{2}}H^{(l)}W_{l})
\end{equation}
where $\hat{A} = A + I_N$, $\hat{D}_{ii} = \sum_j\hat{A}_{ij}$, $H^{(l)}$ is the output of the $l$-th layer, $H^{(0)} = X$, $\sigma$ is activation function and $W_{l}$ are learnable parameters of the $l$-th layer.
We use a shared parameter set  $\theta_s = \{W_1,W_2,...\}$ to embed both source and target model. As proven by Jure Leskovec et al \cite{donnat2018}, graph convolution operations can preserve the similarity of a node pair if their local sub-networks (i.e. the subgraph consisting of $k$-hop neighbors of a node and the node itself) are similar. Therefore, GCN with such shared weight architecture can preserve local sub-network similarity. Under \emph{structural equivalence} hypothesis in complex network theory \cite{Henderson12RolX,Grover2016node2vec}, two nodes having similar local network neighborhoods can be considered structurally similar even if they are from two different networks. Hence, shared weight graph convolutional network architecture can project the nodes of $G_{src}$ and $G_{tgt}$ into a shared embedding space $Z$, where structurally similar nodes have similar representation vectors.

To learn compatible parameters for both $G_{src}$ and $G_{tgt}$, we apply a multi-task loss function preserving properties on two networks simultaneously:
\begin{equation}
L_{gcn} = L_{G_{src}}+L_{G_{tgt}}
\end{equation}
where $L_{G_{src}}$ denotes the loss function on the source network and $L_{G_{tgt}}$ denotes the same function calculated on the target network. We apply the first-order loss function proposed in LINE \cite{Tang2015LINE}:

\begin{equation}
L_{G} = -\sum_{(i,j)\in E}\log\sigma(v_j \cdot v_i)-Q\cdot\mathbb{E}_{k \sim P_{neg}(N)}\log\sigma(-v_i \cdot v_k)
\end{equation}
where $E$ is the set of edge, $Q$ is the number of negative samples, $P_{neg}$ is a noise distribution where negative samples are drawn and $\sigma$ is the sigmoid function. In this paper, we set $P_{neg}(N)\propto d_n^{0.75}$, where $d_n$ is degree of node $n$. 
As a flexible framework, DANE is able to preserve other network properties like triadic closure \cite{HuangTWLf14} when we replace $L_{G}$ with corresponding loss functions.

\subsection{Adversarial Learning Regularization}

In order to align the distribution of $V_{src}$ and $V_{tgt}$ in embedding space $Z$, we add Adversarial Learning Regularization into our model. We train a discriminator to distinguish which network an embedding vector is from, and train the Shared Weight Graph Convolutional Network to confuse the discriminator, just like what GANs do. Such training method will force $P(v\in V_{src}|v=z)$ and $P(v\in V_{tgt}|v=z)$ to keep close almost everywhere in the embedding space $Z$, which is equivalent to distribution alignment.

In this work, inspired by  LSGAN \cite{Mao2017LSGAN}, we design the architecture and loss function of discriminator based on Pearson $\chi^2$ divergence to avoid the instability of adversarial learning. Discriminator $D$ is a multi-layer perceptron having no activation function in the final layer. We expect $D$ to output $0$ when the input vector is sampled from $V_{src}$ and  output $1$ otherwise. %when input vector is sampled from $V_{tgt}$.  
So the discriminator's loss function of is:
\begin{equation}
L_{D}=\mathbb{E}_{x \in V_{src}}[(D(x)-0)^2]+\mathbb{E}_{x \in V_{tgt}}[(D(x)-1)^2]
\end{equation}
where $D(x)$ is the output of the discriminator. In LSGAN, on the contrary, the generator confuses the discriminator unidirectionally by forcing the distribution of fake samples to approximate that of real samples. However, hoping to keep DANE's architecture and loss function symmetric so that it can handle bidirectional domain adaptation, we design following adversarial training loss function:
\begin{equation}
L_{adv}=\mathbb{E}_{x \in V_{src}}[(D(x)-1)^2]+\mathbb{E}_{x \in V_{tgt}}[(D(x)-0)^2]
\end{equation}

We combine the training of Shared Weight Graph Convolutional Network and Adversarial Learning Regularization together by defining the overall loss function of DANE as follows:
\begin{equation}
L=L_{gcn}+\lambda L_{adv}
\label{comb}
\end{equation}
where  $\lambda$ is a hyperparameter to control the weight of regularization. In this paper, we set $\lambda$ as $1$. 
In each iteration, we first train the discriminator for $k$ steps to optimize $L_{D}$, followed by training our embedding model for $1$ step to optimize the graph convolutional network based on Equation \ref{comb}.

\subsection{Theoretical Analysis}

In this section, we show that the better embedding spaces and distributions are aligned, the better the downstream model $M$ performs on target network. We provide a theoretical analysis by exploring the relation between alignment effects and the upper bound of the difference between the loss function value on model $M$ on the source network and the target network when handling a node classification task.

We assume that $M$ outputs the conditional distribution of a node's label $y$ based on its representation vector $v$ and model parameter $\theta$, denoted as $P(y|v;\theta)$. $P(y|v;\theta)$ gives the possibility that a node has a label given the embedding vector of the node. The loss function of the model on $G_{src}$, denoted as $L_{src}$, is defined as follows:
\begin{equation}
\begin{aligned}
L_{src}&=\mathbb{E}(D(\hat{P}_{src}(y|v),P(y|v;\theta)))
\end{aligned}
\end{equation}
where $\hat{P}_{src}(y|v)$ is the groundtruth, and $D(P_1,P_2)$ measures the distance of two distributions. Assuming that $G_{src}$ contain as many nodes as being able to approximate $v$ as a continuous variable, we have the following equation:
\begin{equation}
L_{src}=\int_Z p_{src}(z)\cdot D(\hat{P}_{src}(y|z),P(y|z;\theta))dz
\end{equation}

In a similar way, the performance of the same model on $G_{tgt}$ can be measured by following loss function:
\begin{equation}
\begin{aligned}
L_{tgt}&=\mathbb{E}(D(\hat{P}_{tgt}(y|v),P(y|v;\theta)))\\&=\int_Z p_{tgt}(z)\cdot D(\hat{P}_{tgt}(y|z),P(y|z;\theta))dz
\end{aligned}
\end{equation}

We introduce a theorem:

\begin{theorem}

If following inequalities are satisfied:

\begin{equation}
D(\hat{P}_{src}(y|z),\hat{P}_{tgt}(y|z)) < c, \forall z \in Z
\end{equation}

\begin{equation}
\frac{|p_{src}(z) - p_{tgt}(z)|}{p_{src}(z)} < \epsilon, \forall z \in Z
\end{equation}
where $D(P_1,P_2)$ measures the distance of two distributions and satisfy triangular inequality:
\begin{equation}
D(P_1,P_3)\leq  D(P_1,P_2) + D(P_2,P_3)
\end{equation}

Then we will have following inequality:

\begin{equation}
L_{tgt} - L_{src} \leq \epsilon L_{src} + c + c\cdot \epsilon
\end{equation}

\end{theorem}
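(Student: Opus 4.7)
The plan is to bound $L_{tgt} - L_{src}$ by inserting the mixed intermediate integrand $p_{src}(z)\, D(\hat{P}_{tgt}(y|z), P(y|z;\theta))$, so that the effect of the marginal shift $p_{src}\to p_{tgt}$ and the effect of the conditional groundtruth shift $\hat{P}_{src}\to \hat{P}_{tgt}$ can be isolated and bounded using the two hypotheses separately.

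Concretely, I would write $L_{tgt} - L_{src} = A + B$, where
\begin{equation*}
A = \int_Z \bigl(p_{tgt}(z) - p_{src}(z)\bigr)\, D(\hat{P}_{tgt}(y|z), P(y|z;\theta))\, dz
\end{equation*}
and
\begin{equation*}
B = \int_Z p_{src}(z)\bigl[D(\hat{P}_{tgt}(y|z), P(y|z;\theta)) - D(\hat{P}_{src}(y|z), P(y|z;\theta))\bigr]\, dz.
\end{equation*}
For $B$, the assumed triangle inequality applied pointwise gives $D(\hat{P}_{tgt}, P) - D(\hat{P}_{src}, P) \leq D(\hat{P}_{tgt}, \hat{P}_{src}) < c$, so integrating against $p_{src}$ (which has total mass $1$) yields $B < c$.

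For $A$, since $D \geq 0$, I would first pass to $A \leq \int_Z \lvert p_{tgt}(z) - p_{src}(z)\rvert\, D(\hat{P}_{tgt}, P)\, dz$, then use the ratio hypothesis in the form $\lvert p_{tgt}-p_{src}\rvert < \epsilon\, p_{src}$, and finally apply the triangle inequality once more as $D(\hat{P}_{tgt}, P) \leq D(\hat{P}_{tgt}, \hat{P}_{src}) + D(\hat{P}_{src}, P) < c + D(\hat{P}_{src}, P)$. Substituting and factoring gives $A < \epsilon \int_Z p_{src}(z)\bigl(c + D(\hat{P}_{src}(y|z), P(y|z;\theta))\bigr)\, dz = c\epsilon + \epsilon L_{src}$. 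Adding the two contributions yields $L_{tgt} - L_{src} < \epsilon L_{src} + c + c\epsilon$, the claimed bound.

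The only nontrivial design choice is the mixed intermediate term; once it is fixed, the two hypotheses line up naturally with $B$ and $A$ respectively, and the remainder is elementary integration. I do not foresee a real obstacle. The one point worth stating explicitly is that the inequality $D(\hat{P}_{tgt}, P) - D(\hat{P}_{src}, P) \leq D(\hat{P}_{tgt}, \hat{P}_{src})$ used in $B$ is just the stated triangle inequality with arguments permuted, not an extra assumption.
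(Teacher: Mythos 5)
Your proposal is correct and is essentially the same argument as the paper's: both insert the mixed quantity involving $p_{src}(z)$ and $D(\hat{P}_{tgt}(y|z),P(y|z;\theta))$, use the ratio hypothesis to control the marginal shift, and use the triangle inequality through $\hat{P}_{src}(y|z)$ to control the conditional shift, arriving at the identical three terms $\epsilon L_{src}+c+c\epsilon$. Your explicit $A+B$ telescoping is merely a cleaner bookkeeping of the one-line inequality chain the paper writes down (the paper effectively uses $p_{tgt}\leq(1+\epsilon)p_{src}$ multiplicatively and then distributes), so there is no substantive difference in route.
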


\begin{proof}
\begin{equation}
\begin{aligned}
L_{tgt} - L_{src} &= \int_Z p_{tgt}(z)\cdot D(\hat{P}_{tgt}(y|z),P(y|z;\theta))dz\\ 
&-\int_Zp_{src}(z)\cdot D(\hat{P}_{src}(y|z),P(y|z;\theta)))dz\\
L_{tgt} - L_{src}&\leq  \epsilon\int_Z p_{src}(z)\cdot D(\hat{P}_{src}(y|z),P(y|z;\theta))dz\\
&+\int_Zp_{src}(z)\cdot D(\hat{P}_{src}(y|z),\hat{P}_{tgt}(y|z)))dz\\
&+\epsilon\int_Zp_{src}(z)\cdot D(\hat{P}_{src}(y|z),\hat{P}_{tgt}(y|z)))dz\\
\end{aligned}
\end{equation}
\end{proof}

\begin{table*}[tbp]
\centering
\begin{tabular}{c|c|c|c|c|c|c|c|c}
\hline

\multicolumn{1}{c|}{ \multirow{3}*{Methods} }& \multicolumn{4}{c|}{Paper Citation Network (Single-label)} & \multicolumn{4}{c}{Co-author Network (Multi-label)}\\
\cline{2-9}
\multicolumn{1}{c|}{} &\multicolumn{2}{c|}{A$\rightarrow$B}&\multicolumn{2}{c|}{B$\rightarrow$A}&\multicolumn{2}{c|}{A$\rightarrow$B}&\multicolumn{2}{c}{B$\rightarrow$A}\\
\cline{2-9}
\multicolumn{1}{c|}{} &Macro F1&Micro F1&Macro F1&Micro F1&Macro F1&Micro F1&Macro F1&Micro F1\\
\hline
DeepWalk&0.282&0.381&0.22&0.32&0.517&0.646&0.502&0.620\\
\hline
LINE&0.156&0.214&0.175&0.272&0.525&0.634&0.506&0.601\\
\hline
Node2vec&0.147&0.196&0.248&0.32&0.513&0.632&0.520&0.627\\
\hline
GraphSAGE Unsup&0.671&0.703&\textbf{0.861}&0.853&0.724&0.809&0.741&0.832\\
\hline
DANE&\textbf{0.797}&\textbf{0.803}&0.852&\textbf{0.872}&\textbf{0.785}&\textbf{0.847}&\textbf{0.776}&\textbf{0.849}\\
\hline
\end{tabular}
\caption{\label{nodeclassification}Micro and macro F1 score of different network embedding methods in unsupervised domain adaptation}
\end{table*}

This theorem ensures that an embedding algorithm can support domain adaptation better when it : (1) makes $p_{src}(z)$ and $p_{tgt}(z)$ closer; (2) makes $\hat{P}_{src}(y|z)$ and $\hat{P}_{tgt}(y|z)$ closer. Obviously, the former objective can be achieved via distribution alignment. Simultaneously, the latter objective is achieved via embedding space alignment. Because DANE applies shared weight GCN, it represents two nodes having similar local network neighborhoods with similar embedding vectors. Meanwhile, under structural equivalence hypothesis, two nodes having similar local network neighborhoods are likely to have same label. Consequently, embedding space alignment can help keep $\hat{P}_{src}(y|z)$ and $\hat{P}_{tgt}(y|z)$ close almost everywhere in $Z$. 

\section{Experiments}

\subsection{Experiment Settings}

\subsubsection{Datasets}

\begin{table}
\centering
\begin{tabular}{c||c|c}
\Xhline{1.5pt}
Network Name & {\bf Nodes } & {\bf Edges}\\
\Xhline{1.5pt}
Paper Citation A  & 2277  & 8245 \\
\hline
Paper Citation B &  3121 & 7519 \\
\hline
Co-author A & 1500 & 10184 \\
\hline
Co-author B & 1500 & 10606 \\
\Xhline{1.5pt}
\end{tabular}
\caption{\label{tab:unsurpervised} Network size of two datasets.}
\end{table}

\begin{figure}
\centering
\includegraphics[width=3.0in]{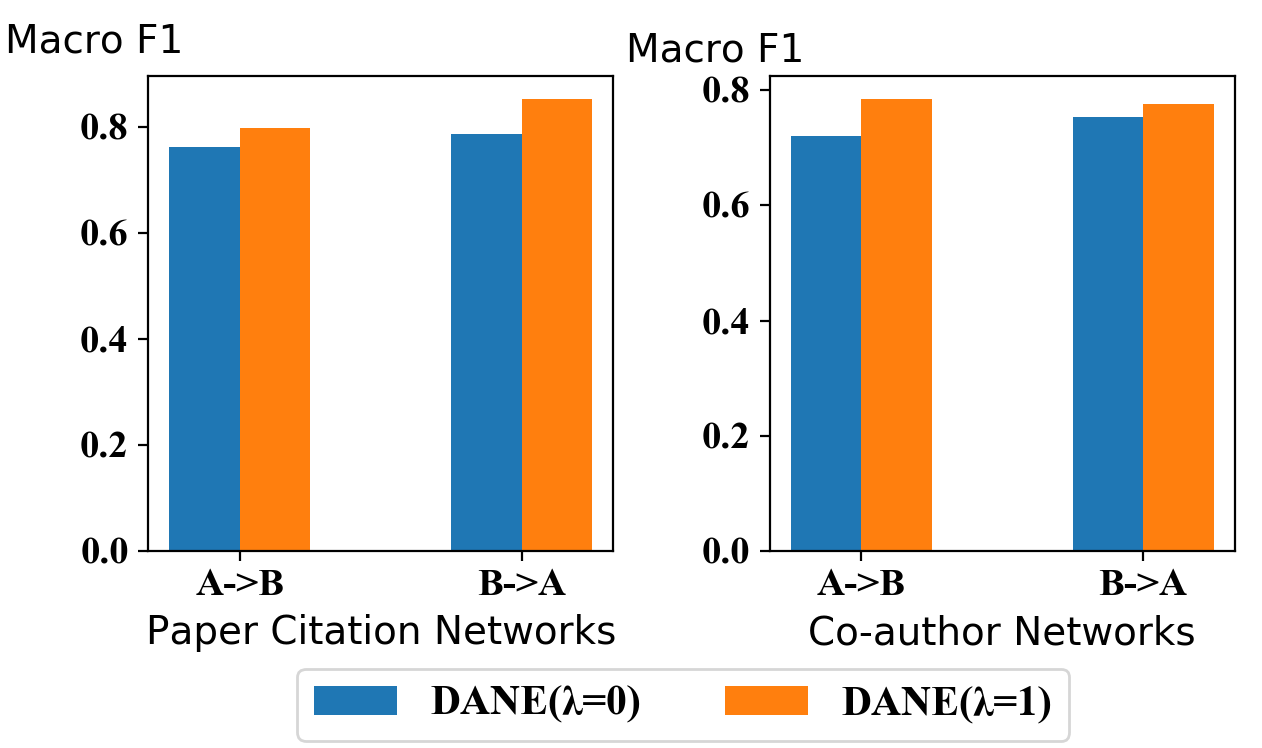}
\caption{Node classification performance of DANE ($\lambda=0$) and DANE ($\lambda=1$) }
\label{fig:2}
\end{figure} 

\begin{figure}
\centering
\includegraphics[width=3.3in]{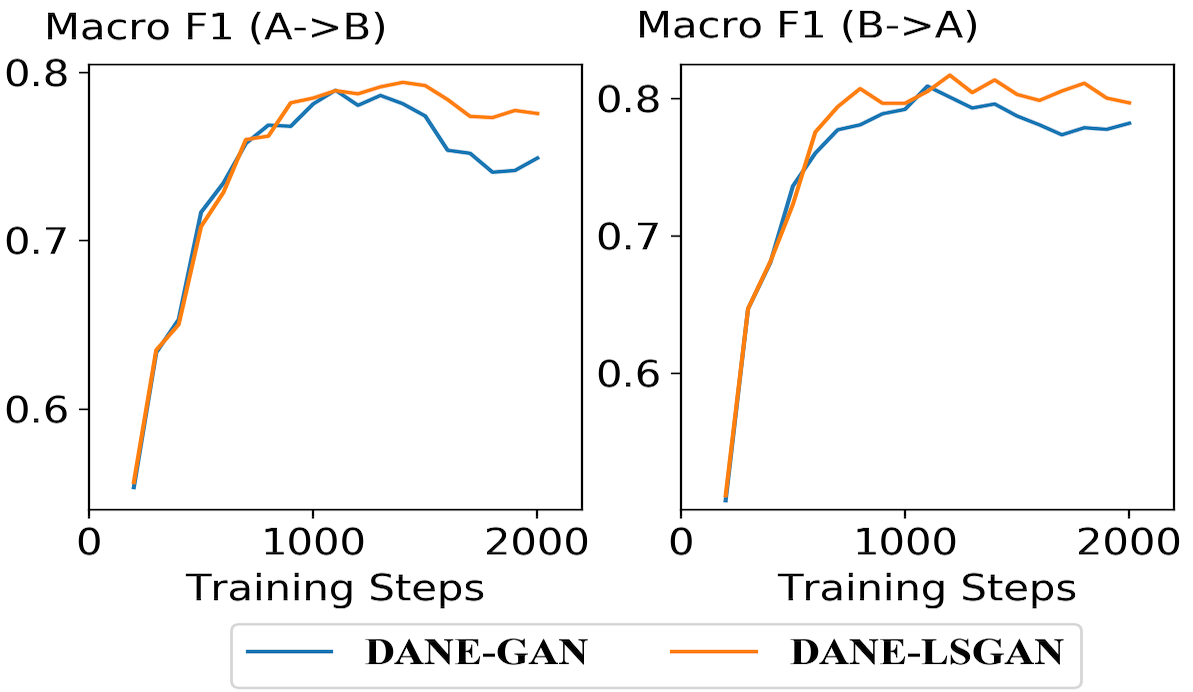}
\caption{Performance vs Training Batches (Paper citation network A $\rightarrow$ B). The orange line performs more steady than the blue line.}
\label{fig:3}
\end{figure} 

\paragraph{Paper Citation Networks.} Paper Citation Networks\footnote{{collected from Aminer database \cite{Tang16Aminer} \label{Aminer}}} consist of two different networks $A$ and $B$, where each node is a paper. The label of each paper is its field. The feature of each node is a word frequency vectors constructed from the abstract of papers.

\paragraph{Co-author Networks.} Co-author Networks\textsuperscript{\ref {Aminer}} consist of two different networks $A$ and $B$, where each node is an author. Each author is assigned with one or more label denoting research topics. The feature of each node is a word frequency vector constructed from the keyword of the author's papers.

\begin{figure*}[tbp]
\centering
\includegraphics[width=6.5in]{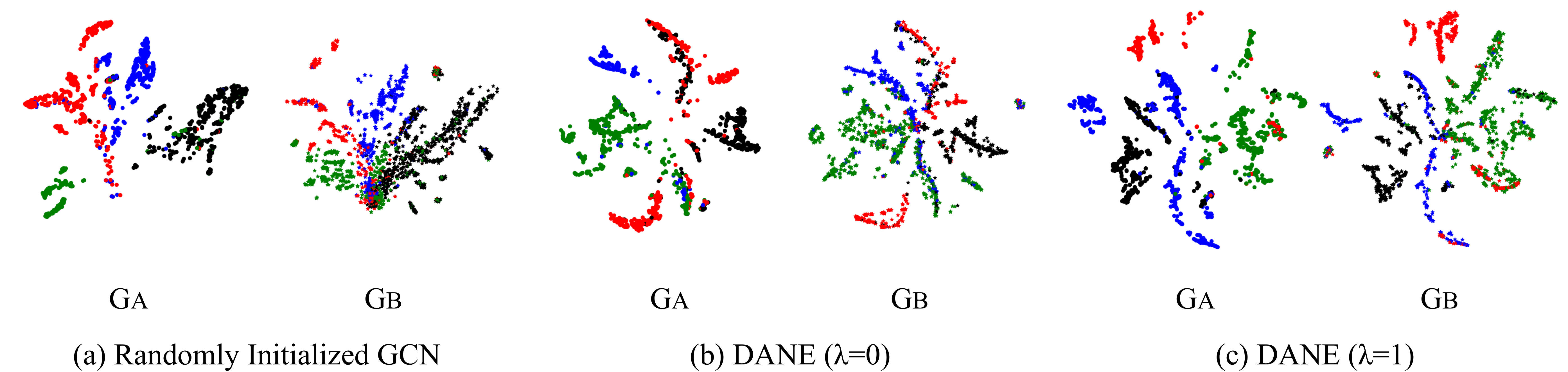}
\caption{Visualization of Paper Citation Networks generated by a randomly initialized GCN, DANE ($\lambda=0$) and DANE ($\lambda=1$).}
\label{fig:4}
\end{figure*}

\subsubsection{Relevant Methods and Evaluation Metrics}
We compare our model with well-recognized network embedding algorithms including DeepWalk, LINE, Node2vec, unsupervised GraphSAGE(GCN version)\cite{Perozzi2014DeepWalk,Tang2015LINE,Grover2016node2vec,Hamilton2017Inductive}. %As DeepWalk, LINE and Node2vec cannot preserve cross. 
When using GraphSAGE, we train it on the source network and directly use the optimized parameter to embed the nodes from target network without further training.
We evaluate these methods by testing their performance on a domain adaptation task of node classification. We train a classifier based on L2-regularized logistic regression via SGD algorithm with the embeddings from source network, then test the performance of the classifier on target network. We adopt micro and macro F1 score to evaluate the performance. 
\subsubsection{Hyperparameter Set-up}
To be fair, for all methods we set the embedding dimension to 128 on \textbf{Paper Citation Networks}, and 32 on \textbf{Co-author Networks}. For methods applying negative sampling, we set negative sampling number as 5. For methods employing GCN, we use same activation function and 2-layer architecture.

\subsection{Domain Adaptation}
In this section, we firstly compare DANE with other baselines by training classifiers on source network embedding and testing their performance on target network embedding. Table \ref{nodeclassification} shows the result of all methods on \emph{Paper Citation Networks} and \emph{Co-author Networks}. DANE outperforms all other methods in knowledge transferring. Besides, we can find following phenomenons:

All deep network embedding algorithms outperforms other methods.
We propose that this is because the local sub-network is suitable for measuring the similarity cross-network node pairs. DeepWalk, LINE and Node2vec all preserve $k$-order proximity, which consider two nodes within $k$ hops are similar. However, the distance between two nodes from different networks is infinity. Consequently, the structural similarity of cross-network node pairs can not be preserved, resulting in poorly aligned embedding spaces and performance close to random guess on target network.

Meanwhile, with same architecture of GCN, DANE outperform GraphSAGE. This phenomenon reflects the contribution of the combination of distribution alignment and the multi-task loss function.

\subsection{Ablation Test}

In this section, by comparing DANE with its variants, we indicate the importance of DANE's unique design.

First, to analyze the importance of distribution alignment, we compare complete DANE, denoted as DANE ($\lambda=1$), and its variant which has no adversarial learning regularization, denoted as DANE ($\lambda=0$). The result is shown in Fig \ref{fig:2}. DANE ($\lambda=1$) performs better in both transferring directions on two networks comparing to its own variant that has no adversarial learning regularization. This phenomenon indicates the importance of adversarial learning regularization. 

Second, we replace the LSGAN-based adversarial learning regularization with GAN-based regularization while preserving all other hyper-parameter settings. Fig \ref{fig:3} is a line chart reflecting the relation between number of training batches and model's performance on target network. Compared with the LSGAN-based DANE, the GAN-based DANE suffers from a more serious performance drop after reaching the peak\footnote{{Although the GAN-variant performs as steadily as the LSGAN-based version under some random seeds, it is difficult to find those 'steady' seeds when labels on target network are scarce.}}.

\subsection{Embedding Visualization}

In this section, to understand the advantage of distribution alignment better, we visualize the embedding of two paper citation networks generated by DANE ($\lambda=1$), DANE ($\lambda=0$) and a randomly initialized GCN. The randomly initialized GCN also embeds two networks via shared parameters. We use t-SNE \cite{Maaten2008Visualizing} package to reduce the dimensionality of embedding vectors to $2$. The result is shown in Fig \ref{fig:4}. The color of each node represent its label.

Due to shared parameter architecture, all three algorithms achieves that most nodes having similar representations have same labels. This phenomenon indicates the advantage of share parameter architecture: it can align embedding space without requirement for training. After further training without distribution alignment, DANE ($\lambda=0$) succeeds in making the distribution of two network's embedding similar. Furthermore, when adding distribution alignment, DANE ($\lambda=1$) can generate embeddings with more similar distributions in the area nearby boundary line of different labels than DANE ($\lambda=0$). Therefore, DANE will have better performance in model transferring task.

\section{Conclusion and Future Works}

In this paper, we formulate the task of unsupervised network embedding supporting domain adaptation on multiple domain compatible networks. To the best of our knowledge, we are the first team to propose this significant task. Meanwhile, we propose Domain Adaptive Network Embedding (DANE) to handle this task. Specifically, we apply a shared weight graph convolutional network architecture with constraints of adversarial learning regularization. Empirically, we verify DANE's performance in a variety of domain compatible network
datasets. The extensive experimental results of node classification and embedding visualization indicate the advantages of DANE. For future work, one intriguing direction is to generalize DANE to solve the domain adaptation on heterogeneous networks. Also, improving the framework of DANE to solve semi-supervised and supervised domain adaptation will be beneficial for diverse scenarioes.

\section*{Acknowledgments}

This work was supported by the National Natural Science Foundation of China (Grant No. 61876006 and No. 61572041).
\appendix

%\section{\LaTeX{} and Word Style Files}\label{stylefiles}

%The \LaTeX{} and Word style files are available on the IJCAI--19
%website, \url{http://www.ijcai19.org}.
%These style files implement the formatting instructions in this
%document.

%The \LaTeX{} files are {\tt ijcai19.sty} and {\tt ijcai19.tex}, and
%the Bib\TeX{} files are {\tt named.bst} and {\tt ijcai19.bib}. The
%\LaTeX{} style file is for version 2e of \LaTeX{}, and the Bib\TeX{}
%style file is for version 0.99c of Bib\TeX{} ({\em not} version
%0.98i). The {\tt ijcai19.sty} style differs from the {\tt
%ijcai18.sty} file used for IJCAI--18.

%The Microsoft Word style file consists of a single file, {\tt
%ijcai19.doc}. This template differs from the one used for
%IJCAI--18.

%These Microsoft Word and \LaTeX{} files contain the source of the
%present document and may serve as a formatting sample.  

%Further information on using these styles for the preparation of
%papers for IJCAI--19 can be obtained by contacting {\tt
%pcchair@ijcai19.org}.

%% The file named.bst is a bibliography style file for BibTeX 0.99c
\bibliographystyle{named}
\bibliography{ijcai19}

\end{document}